\newtheorem {definition}   {Definition}
\newtheorem {lemma}       {Lemma}
\newtheorem {proposition} {Proposition}
\newtheorem {note}          {Note}
\newtheorem {example}    {Example}
\newtheorem {remark}          {Remark}
\newcommand {\tup}[1]      {{\langle #1 \rangle}}
\newcommand{\powerset}  {\wp}
\newcommand{\ABF}         {{\sf ABF}}
\newcommand{\Free}        {{\sf Free}}
\newcommand{\Args}        {{\cal A}}
\newcommand{\F}             {{\sf F}}
\newcommand{\AF}        {{\sf AF}}
\newcommand{\I}{\mathcal{I}}
\def\addlegendimage{\csname pgfplots@addlegendimage\endcsname}
\newcommand{\ignore}[1] { }
\newcommand{\jesse}[1]{{\bf COMMENT}{\color{orange}#1}}
\newcommand{\chrrminv}[1]{}
\newcommand{\Ab}{Ab}
\newcommand{\baserank}{{best rank}\xspace}
\newcommand{\basescore}{{\mathsf{BestScore}}}
\title{Ranking-based Argumentation Semantics Applied to Logical Argumentation (full version)}
\author[1]{Jesse Heyninck}
\author[2,3]{Badran Raddaoui}
\author[3]{Christian Stra{\ss}er}
\affil[1]{Open Universiteit, the Netherlands}
\affil[2]{SAMOVAR, Télécom SudParis, Institut Polytechnique de Paris, France}
\affil[3]{Institute for Philosophy II, Ruhr-Universität Bochum, Germany}
\date{}
\begin{document}

\maketitle

\begin{abstract}
In formal argumentation, a distinction can be made between extension-based semantics, where sets of arguments are either (jointly) accepted or not, and ranking-based semantics, where grades of acceptability are assigned to arguments.
Another important distinction is that between abstract approaches, that abstract away from the content of arguments, and structured approaches, that specify a method of constructing argument graphs on the basis of a knowledge base.
While ranking-based semantics have been extensively applied to abstract argumentation, few work has been done on ranking-based semantics for structured argumentation. In this paper, we make a systematic investigation into the behaviour of ranking-based semantics applied to existing formalisms for structured argumentation.
We show that a wide class of ranking-based semantics gives rise to so-called culpability measures, and are relatively robust to specific choices in argument construction methods.
\end{abstract}

\section{Introduction}

A central  concept in formal argumentation \cite{Dung1995} is that of an \emph{argumentation graph}, that consists of a set of nodes interpreted as arguments, and edges interpreted as argumentative attacks. 
Within the area of formal argumentation, one can distinguish between \emph{abstract} and \emph{structured}  approaches.
Abstract argumentation abstracts away from the content of arguments and thus can be seen as taking argumentation graphs as primitive objects of study.
Structured approaches, on the other hand, specify construction methods defining how an argumentation framework is generated on the basis of a knowledge base consisting of formulas specified in a logical language.
\emph{Plausible reasoning} \cite{besnard2018review,ABH19,arieli2021logic} is a sub-class of structured argumentation formalising reasoning on the basis of defeasible premises and non-defeasible rules or logical systems such as classical logic. 

\emph{Argumentation semantics} specify the acceptability of arguments given an argumentation graph. One can distinguish between so-called \emph{extension-based} semantics, that specify which sets of arguments are jointly acceptable, and \emph{gradual} or \emph{ranking-based} semantics, that express acceptability in terms of a ranking over the arguments. Recent years have seen an intensive study of both kind of semantics in both breath and depth (see \cite{bonzon2016comparative,baroni2019fine} for some overviews) from the abstract perspective.
For extension-based semantics, the structured perspective has played a major role.
For ranking-based semantics, on the other hand, virtually no work has been carried out from the structured perspective (the only exception being \cite{amgoud2015argumentation}).
Such a gap is rather surprising, as structured argumentation still provides one of the gold standards of the construction or generation of abstract argumentation graphs.
Indeed, it is not clear whether and how the ranking-based semantics provide intuitive meaning from the logical perspective of structured argumentation. 
Furthermore, the study of ranking arguments in a structured setting could be helpful for numerous real-world applications of structured argumentation \cite{DBLP:journals/argcom/CyrasOKT21,DBLP:journals/ail/Prakken20,DBLP:conf/kr/ZengSTCLWCM20}.
For instance, in medicine and legal reasoning, it might be more appropriate to assign arguments a gradual score instead of distinguishing merely between acceptance and rejection.

In this paper, we make the first general study of ranking-based semantics applied to structured argumentation frameworks.
We 
focus on logic-based argumentation, an important and well-studied sub-class of structured argumentation. 
In particular, we answer the following questions:

\begin{enumerate}
    \item What is the logical meaning of ranking-based semantics applied to argumentation graphs generated on the basis of  strict and defeasible information?
    \item What is the influence of choices 
    underlying the construction of the argument graph (e.g.\ the form of argumentative attacks) on the outcome of ranking-based semantics?
\end{enumerate}

The main results are the following: (1) a large family of ranking-based semantics induce so-called \emph{culpability measures}, and behave better than many existing culpability measures,
(2) ranking-based semantics applied over argumentation graphs constructed using approaches from structured argumentation 
are robust under many choices considered in the literature such as different argumentative attack forms or argument construction methods, apart from the filtering out of inconsistent arguments. 

\noindent
{\bf Outline of the paper.}
In Section \ref{sec:prelim}, the relevant preliminaries on simple contrapositive assumption-based frameworks (Subsection \ref{sec:scabfs}), ranking-based argumentation semantics (Subsection \ref{sec:ranking:based:semantics}) and culpability measures (Subsection \ref{sec:inconsistency:measures}) are recalled. We review  postulates for ranking-based semantics in Section \ref{sec:ranking-based:AF:semantics:postulates}. 
In Section \ref{sec:ranking:based:are:culp}, we show that ranking-based semantics applied to argumentation frameworks obtained on the basis of simple contrapositive assumption-based frameworks induce culpability measures. In Section \ref{sec:other:structured:arg}, we look at other structured argumentation formalisms.
We review related work in Section \ref{sec:rel:work} and conclude in  Section \ref{sec:concl}.

\section{Preliminaries}
\label{sec:prelim}
This section contains background material about simple contrapositive assumption-based argumentation frameworks (more details can be found in~\cite{AH21,HA20}), 
ranking-based semantics, and
culpability measures.

\subsection{Simple Contrapositive Assumption-based Frameworks}
\label{sec:scabfs}

We shall denote by ${\cal L}$ an arbitrary propositional language.
Atomic formulas in 
${\cal L}$ are denoted by $p,q,r,$ etc., arbitrary  ${\cal L}$-formulas (atomic or compound) are denoted by $\psi,\phi,\delta,$ etc., and sets of 
formulas  in ${\cal L}$ are denoted by $\Gamma$, $\Delta$, $\Theta,$ etc. (possibly primed or indexed). 
$\powerset(\mathcal{L})$ denotes
the power-set of ${\cal L}$.

Instead of restricting ourselves to one specific logic, we merely assume some common properties of logics in order to keep our results as general as possible.

\begin{definition}[{\bf Logic}]
\label{def:logic}
{ A {\em logic\/} for a language ${\cal L}$ is a pair ${\mathfrak L} = \tup{{\cal L},\vdash}$, where 
$\vdash$ is a (Tarskian) consequence relation for ${\cal L}$, that is, a binary relation among sets of formulas 
and individual formulas in ${\cal L}$, satisfying the following conditions:  
\begin{itemize}
    \item [{\sl Reflexivity\/}:] if $\psi \in \Gamma$, then $\Gamma \vdash \psi$.  
    \item [{\sl Monotonicity\/}:] 
             if $\Gamma \vdash \psi$ and $\Gamma \subseteq \Gamma^\prime$,
             then $\Gamma^\prime \vdash \psi$.
    \item [{\sl Transitivity\/}:]
             if $\Gamma \vdash \psi$ and $\Gamma^\prime,\psi \vdash \phi$
             then $\Gamma\cup\Gamma^\prime \vdash \phi$.
\end{itemize}

} \end{definition}

In what follows, we shall assume that the language ${\cal L}$ contains at least the following connectives and constant:
\begin{description}
    \item [] a {\em $\vdash$-negation\/} $\neg$, satisfying: $p \not\vdash \neg p$ and $\neg p \not\vdash p$
            (for every atomic $p$).
    \item [] a {\em $\vdash$-conjunction\/} $\wedge$, satisfying: $\Gamma \vdash \psi \wedge \phi$ iff 
            $\Gamma \vdash \psi$ and $\Gamma \vdash \phi$.

    \item [] a {\em $\vdash$-falsity\/} ${\sf F}$, satisfying: ${\sf F} \vdash \psi$ for every formula $\psi$.\footnote
            {Particularly, $\F$ is not a standard atomic formula, since $\F\vdash\neg\F$.} 
\end{description}
Our results only assume the above connectives with the respective properties. This means that these results apply to any language with additional connectives or additional properties assumed on the connectives.
We abbreviate $\{\neg\psi \mid \psi \in \Gamma\}$ by $\neg\Gamma$, and when $\Gamma$ is finite 
we denote by $\bigwedge\!\Gamma$ the conjunction of all the formulas in $\Gamma$.  

\medskip
Given a logic ${\mathfrak L} = \tup{{\cal L},\vdash}$, the $\vdash$-closure of a set $\Gamma$ of ${\cal L}$-formulas is 
defined by $Cn_{\vdash}(\Gamma)=\{\psi\mid \Gamma\vdash\psi\}$. When $\vdash$ is clear from the context or
arbitrary, we will sometimes simply write $Cn(\Gamma)$. 
Now,
\begin{itemize}
     \item We shall say that $\psi$ is a {\em $\vdash$-theorem\/} if $\psi \in Cn_{\vdash}(\emptyset)$, and that $\Gamma$ is 
              {\em $\vdash$-consistent\/} if $Cn_{\vdash}(\Gamma) \neq {\cal L}$ 
              (thus, $\Gamma \not\vdash \F$, otherwise, we say that $\Gamma$ is {\em $\vdash$-inconsistent}).
\item ${\mathfrak L}$ is called {\em explosive\/}, if for every ${\cal L}$-formula $\psi$ the set $\{\psi,\neg\psi\}$ is $\vdash$-inconsistent.
\item ${\mathfrak L}$ is called {\em contrapositive\/}, if (i)~$\vdash \neg \mathsf{F}$, and (ii)~for every non-empty set
$\Gamma$ and $\psi$ it holds that $\Gamma \vdash \neg\psi$ iff  either $\psi = {\sf F}$ or for every $\phi \in \Gamma$ we have that 
$(\Gamma \setminus \{\phi\}) \cup \{ \psi \} \vdash \neg\phi$.
 \item ${\mathfrak L}$ is called {\sl uniform\/} if
for any $\Gamma\cup\Gamma'\cup\{\phi\}\subseteq {\cal L}$, if $\Gamma'\not\vdash{\sf F}$ and $\Gamma\cup\{\phi\}$ and $\Gamma'$ have no atoms in common, it holds that: $\Gamma\vdash \phi$ iff  $\Gamma\cup\Gamma'\vdash \phi$.
\end{itemize}

\paragraph{Remark.}
{ Perhaps the most notable example of a logic which is explosive, contrapositive and uniform, is classical logic (in short, ${\sf CL}$)\footnote{In fact, by the {\L}os-Suszko theorem \cite{Los_Suszko_1958} any structural logic with a characteristic matrix is uniform.}.
Intuitionistic logic and standard modal logics are other examples 
of well-known formalisms having these properties. }

\begin{note}
\label{note:inconsistent-set}
{ A useful property of an explosive logic ${\mathfrak L} = \tup{{\cal L},\vdash}$ is that for every set $\Gamma$ of ${\cal L}$-formulas
and any ${\cal L}$-formulas $\psi$ and $\phi$, if $\Gamma \vdash \psi$ and $\Gamma \vdash \neg\psi$, then $\Gamma \vdash \phi$.

} \end{note}

The following family of assumption-based argumentation frameworks~\cite{Bondarenko1997} has been shown 
to be a useful setting for argumentative reasoning~\cite{HA20}.
In more detail, a simple contrapositive assumption-based framework 
is a knowledge base consisting of a set of strict premises $\Gamma$, a set of defeasible premises $\Ab$, a logic $\mathfrak{L}$, and a contrariness operator specifying which conflicts are taken into account.

\begin{definition}[{\bf Simple contrapositive ABFs}]
\label{deductivesystems}
{
An \emph{assumption-based framework\/} (ABF, for short) is a tuple ${\sf ABF}= \tup{\mathfrak{L}, \Gamma, \Ab, \sim}$ 
where:
\begin{itemize}
\item ${\mathfrak L} = \tup{{\cal L},\vdash}$ is a propositional Tarskian logic.
\item $\Gamma$ (the \emph{strict assumptions\/}) and $\Ab$ (the \emph{candidate/defeasible assumptions\/}) 
         are distinct (countable) sets of $\mathcal{L}$-formulas, where the former is assumed to be $\vdash$-consistent 
         and the latter is assumed to be nonempty. 
\item $\sim\::\Ab\rightarrow \powerset(\mathcal{L})$ is a \emph{contrariness operator\/}, assigning a finite set of ${\cal L}$-formulas 
         to every defeasible assumption in $\Ab$, such that for every consistent and non-theorem formula $\psi \in \Ab $ 
         it holds that $\psi \not\vdash \bigvee\!\sim\!\psi$ and $\bigvee\!\sim\!\psi \not\vdash \psi$. 
\end{itemize}
A {\em simple contrapositive\/} $\ABF$ is an 
assumption-based framework 
${\sf ABF} = \tup{\mathfrak{L}, \Gamma, \Ab, \sim}$, 
where $\mathfrak{L}$ is an explosive, contrapositive and uniform logic, and $\sim\!\psi=\{\lnot\psi\}$.\footnote{Notice that uniformity is not required explicitly in \cite{HA18}. To avoid clutter, we choose to adapt the definition here instead of using ``simple contrapositive uniform assumption-based frameworks.''}
} \end{definition}

\begin{definition}
\label{def:non-proiritized-attack}
{ Let $\ABF=\tup{\mathfrak{L},\Gamma,\Ab,\sim}$ be an assumption-based framework, $\Delta,\Theta \subseteq \Ab$, 
and $\psi \in \Ab$. 
Then, 
$\Delta$ \emph{attacks\/} $\psi$ iff $\Gamma\cup\Delta \vdash \phi$ for some $\phi \in \:\sim\!\psi$. 
Then, $\Delta$ attacks $\Theta$ if $\Delta$ attacks some $\psi \in \Theta$.
} \end{definition}

The following useful notions 
are used in the paper:

\begin{definition}\label{def:mcs}
 Let $\ABF=\tup{\mathfrak{L},\Gamma,\Ab,\sim}$.
 A set $\Delta\subseteq\Ab$ is {\em maximal consistent\/} 
in $\ABF$, if (i)~$\Gamma\cup\Delta \not\vdash \F$ and (ii)~$\Gamma\cup\Delta' \vdash \F$ for every 
$\Delta \subsetneq \Delta'\subseteq\Ab$.
The set of all the maximal consistent sets in $\ABF$ is denoted ${\sf MCS}(\ABF)$.
A set $\Delta\subseteq\Ab$ is {\em minimal inconsistent} in $\ABF$, if  (i)~$\Gamma\cup\Delta \vdash \F$ and (ii)~$\Gamma\cup\Delta' \not\vdash \F$ for every 
$\Delta \supsetneq \Delta' \subseteq\Ab$.
The set of all the minimal inconsistent sets in $\ABF$ is denoted ${\sf MIC}(\ABF)$.
\\
We define the set of {\em free formulas\/} in $\ABF$ as ${\sf FREE}(\ABF)=\Ab\setminus\bigcup {\sf MIC}(\ABF)$.
\end{definition}

For a set of formulas $\Ab\subseteq {\cal L}$, we will denote the set of all classically maximal consistent subsets of $\Ab$, defined as  ${\sf MCS}(\tup{\mathsf{CL},\emptyset,Ab,\sim})$, by ${\sf MCS}(Ab)$.
Similarly for the minimal inconsistent subsets (${\sf MIC}(\Ab)$) and the free formulas (${\sf FREE}(\Ab)$) of $\Ab$.

\begin{example}
\label{ex:ABF}
Let $\mathfrak{L} = {\sf CL}$,
$\Gamma=\emptyset$, $Ab=\{p,\lnot p, q\}$, and $\sim\!\psi = \{\neg\psi\}$.
An attack diagram for this ABF
is shown in Figure~\ref{figure:ex:ABF}. 
In fact, an attack diagram consists of all subsets of assumptions, and the attacks according to Definition \ref{def:non-proiritized-attack} between these assumptions.
Note that since in ${\sf CL}$
inconsistent sets of premises imply {\em any\/} conclusion, 
the classically inconsistent set $\{p,\neg p,q\}$ attacks all the other sets in the diagram
(for instance, $\{p,\neg p,q\}$ attacks $\{q\}$, since $\{p,\neg p,q \}\vdash \neg q$).

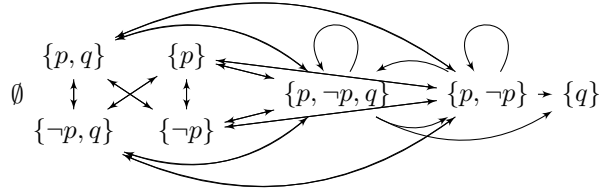
\begin{figure}[!h]
\centering
\begin{tikzpicture}[scale=0.5]
\tikzset{edge/.style = {->,> = latex'}}

\node (q) at (7.5,0) {$\{q\}$};
\node (p&-p) at (5,0) {$\{p,\lnot p\}$};
\node (p&-pq) at (1,0) {$\{p,\lnot p,q\}$};

\node (p) at (-3,1) {$\{p\}$};
\node (-p) at (-3,-1) {$\{\lnot p\}$};
\node (pq) at (-6,1) {$\{ p,q\}$};
\node (-pq) at (-6,-1) {$\{ \neg p,q\}$};
\node (empty) at (-7.5,0) {$\emptyset$};

\draw[edge,loop=20, out=60,in=120,looseness=7,scale=0.6] (p&-p) to (p&-p);
\draw[edge,loop=20, out=60,in=120,looseness=7,scale=0.6] (p&-pq) to (p&-pq);

\draw[edge] (p&-p) to (q);
\draw[edge] (p&-p) to (p);
\draw[edge] (p&-p) to (-p);
\draw[edge, bend right=30] (p&-p) to (pq);
\draw[edge, bend left=30] (p&-p) to (-pq);
\draw[edge] (p) to (p&-p);
\draw[edge] (-p) to (p&-p);
\draw[edge, bend left=30] (pq) to (p&-p);
\draw[edge, bend right=30] (-pq) to (p&-p);
\draw[edge, bend right=30] (p&-pq) to (q);
\draw[edge] (p&-pq) to (p);
\draw[edge] (p&-pq) to (-p);
\draw[edge, bend right=30] (p&-pq) to (pq);
\draw[edge, bend left=30] (p&-pq) to (-pq);
\draw[edge] (p) to (p&-pq);
\draw[edge] (-p) to (p&-pq);
\draw[edge, bend left=30] (pq) to (p&-pq);
\draw[edge, bend right=30] (-pq) to (p&-pq);
\draw[edge, bend right=30] (p&-pq) to (p&-p);
\draw[edge, bend right=30] (p&-p) to (p&-pq);

\draw[edge] (p) to (-p);
\draw[edge] (p) to (-pq);
\draw[edge] (-p) to (p);
\draw[edge] (-p) to (pq);
\draw[edge] (-pq) to (pq);
\draw[edge] (pq) to (-pq);
\draw[edge] (-pq) to (p);
\draw[edge] (pq) to (-p);

\end{tikzpicture}

\caption{\fontsize{9}{1}An attack diagram for Example~\ref{ex:ABF}}\label{figure:ex:ABF}
\end{figure}
 \end{example}

\subsection{Ranking-based Argumentation Semantics}
\label{sec:ranking:based:semantics}

Ranking-based semantics have been mainly studied from the abstract perspective, that focuses on abstract argumentation frameworks:

\begin{definition}
An \emph{argumentation framework}  $\AF=(\Args,\rightarrow)$ consists of a set of arguments $\Args$ and an attack relation ${\rightarrow} \subseteq \Args\times\Args$.
Where $a\in \Args$, $a^{-}=\{b\in\Args\mid b\rightarrow a\}$.
\end{definition}

Notice that for any ${\sf ABF} = \langle \mathfrak{L},\Gamma,Ab,\sim\rangle$, $(\wp(\Ab),\rightarrow)$
is an argumentation framework, where $\rightarrow$ is defined as in Definition \ref{def:non-proiritized-attack}.
In other words, an assumption-based argumentation gives a method of how to construct an argumentation framework. We now discuss ranking-based semantics, that allow to determine the \emph{acceptability} of arguments  for a given $\AF$.

There has been a wide range of ranking-based semantics proposed in the literature to rank order arguments from the most acceptable to the weakest arguments \cite{bonzon2016comparative}.
Following Baroni, Rago and Toni \cite{baroni2019fine}, we consider ranking-based semantics as mappings $\sigma$ of arguments to a set $\mathbb{I}$ preordered by $\leq$ relative to an argumentation framework $\AF=(\Args,\rightarrow)$.
Given $a,b\in\Args$, $\sigma(a) \geq \sigma(b)$ means that the argument $a$ is at least as acceptable as the argument $b$ according to $\sigma$.
For simplicity, we assume the image of $\sigma$ is always $\mathbb{R}_{\geq0}$.
In the remainder of the paper, we will denote $\sigma_{\AF}$ as the ranking-based semantics relative to $\AF$.
A high ranking $\sigma_{\AF}(a)$ represents a high quality of an argument $a$ w.r.t.\ $\AF$ according to $\sigma$.

As an illustration of a ranking-based semantics, we look at the \emph{categorizer semantics}, first presented in \cite{besnard2001logic} for acyclic frameworks but generalized to any argumentation framework in \cite{amgoud2017acceptability}. This semantics formalizes the idea that the number of attackers decreases the quality of an argument. Formally, it works by  taking the inverse of the sum of the score of the attacking arguments iteratively. 

\begin{definition}
Given an $\AF=(\Args,\rightarrow)$ and $a\in\Args$:
\begin{align}
{\tt cat}_0(a)&=&1\\
{\tt cat}_{i+1}(a)&=& \frac{1}{1+\Sigma_{b\in a^{-}}{\tt cat}_i(b)}
\end{align}
\end{definition}

It has been shown in \cite{amgoud2017acceptability}  that ${\tt cat}_i$ converges as $i$ approaches infinity.
We therefore define $\sigma^{\tt cat}_{\AF}(a)=\lim_{i\to\infty}{\tt cat}_i(a)$.

\begin{example}
Consider the argumentation framework from Figure \ref{figure:ex:ABF}. It can be verified (for example using the Tweety-library \cite{thimm2014tweety}) that:
$\sigma_{\AF}^{\tt cat}(\emptyset)=1$,
$\sigma_{\AF}^{\tt cat}(\{q\})=0.71$, $\sigma_{\AF}^{\tt cat}(\{p\})=\sigma_{\AF}^{\tt cat}(\{\lnot p\})=0.52$, and $\sigma_{\AF}^{\tt cat}(\{p,\lnot p,q\})=0.41$.
\end{example}

\subsection{Culpability Measures}
\label{sec:inconsistency:measures}

A fundamental research topic in reasoning with conflicting information is the measure of inconsistencies. A lot of work focuses on \emph{inconsistency measures} that allow to quantify \emph{how inconsistent} \cite{DBLP:conf/kr/JabbourMRSS16,Thimm:2019d} a \emph{set} of formulas (typically propositional) is.

On  the other hand, 
\emph{culpability measures} quantify the degree of responsibility of a formula in making a set of formulas inconsistent.
In more detail, a culpability measure for a knowledge base $\Ab\subseteq {\cal L}$ is a function ${\cal C}_{\Ab}: \Ab \rightarrow \mathbb{R}_{\geq 0}$ that satisfies the following postulates:

\begin{description}
\item[Consistency.] If $\Ab$ is consistent, then ${\cal C}_{\Ab}(\phi)=0$ for every $\phi\in\Ab$.

\item[Blame.] If $\phi\in \bigcup{\sf MIC}(\Ab)$, then ${\cal C}_{\Ab}(\phi)>0$.
\end{description}
Thus, \emph{consistency} states that the culpability measure of premises in a consistent set are minimal (i.e.\ in consistent sets, no formula is culpable for inconsistency), whereas \emph{blame} states that any formula involved in an inconsistency will receive a non-zero culpability degree.

In \cite{hunter2008measuring}, culpability measures are defined as \emph{inconsistency values}\footnote{To comply with the notation in this paper, higher inconsistency values intuitively represent a higher degree of inconsistency, in contradistinction to the work of  \cite{hunter2008measuring}.}.
The following properties for inconsistency values are defined in \cite{hunter2008measuring} (we adapt them to our notation).\footnote{In  \cite{hunter2008measuring}, other properties of culpability measures are discussed, but they describe the relation between inconsistency and culpability measures, and are therefore not relevant to our study.}

\begin{description}
\item[Free Formula.]  If $\phi\in \Free(\Ab)$\footnote{Recall that ${\sf FREE}(\Ab)={\sf Free}(\tup{\mathsf{CL},\emptyset,Ab,\sim}) = \Ab\setminus\bigcup {\sf MIC}(\Ab)$.}, then ${\cal C}_{\Ab }(\phi)\leq {\cal C}_{\Ab}(\psi)$ for any $\psi\in\Ab$.\footnote{In \cite{hunter2008measuring}, this property is called \emph{minimality}, and it is required that a free formula evaluates to 0. We use a weaker condition that allows to differentiate between the score of $p$ in $\{p,q,\lnot q\}$ and in $\{p,q\}$.}

\item[Dominance.] Where $\phi, \psi \in \Ab$, 
if $\phi\vdash \psi$ and $\phi \not\vdash {\sf F}$, then ${\cal C}_{\Ab}(\phi)\geq {\cal C}_{\Ab}(\psi)$.
\end{description}

Thus, \emph{free formula} states that formulas not involved in any inconsistency are at most as culpable as any other formula, and \emph{dominance} states that logically stronger formulas are at least as culpable as their weaker counterparts.

By ways of illustration, we consider the following culpability measures ${\cal C}^d_{Ab}$, ${\cal C}^{\ast}_{\Ab}$ and ${\cal C}^{c}_{\Ab}$ from \cite{hunter2008measuring}:\footnote{In our presentation, unlike in \cite{hunter2008measuring}, the measures are normalized to ensure a mapping to $[0,1]$. 
We follow the convention that $\sum\emptyset=0$ which means that ${\cal C}^{c}_{\Ab}(\phi)=0$ for any free formula $\phi$. For ${\cal C}^{\ast}_{\Ab}$ and ${\cal C}^{c}_{\Ab}$, the knowledge base is supposed to be finite.}
\begin{align*}
{\cal C}^d_{Ab}(\phi)&=& \begin{cases}
    1 & \mbox{if }\phi\in \bigcup{\sf MIC}(\Ab)\\
    0 & \mbox{otherwise}
\end{cases}\\
{\cal C}^{\ast}_{\Ab}(\phi)&=&\frac{| \{\Delta\in {\sf MIC}(\Ab)\mid \phi\in\Delta\}|}{| {\sf MIC}(\Ab)|}\\
{\cal C}^{c}_{\Ab}(\phi)&=& 
    \displaystyle\left(\sum_{\Delta\in {\sf MIC}(\Ab)\mbox{ and } \phi\in\Delta}\frac{1}{|\Delta|} \right)\frac{1}{\sum_{\Delta\in {\sf MIC}(\Ab)}|\Delta|} 
\end{align*}

\begin{example}
Consider the knowledge base $\Gamma=\{p\land \lnot p, q,r,\lnot q\lor \lnot r,s\}$.
The minimal inconsistent sets of $\Gamma$ are ${\sf MIC}(\Gamma)=\{\{p\land \lnot p\},\{ q,r,\lnot q\lor \lnot r\}\}$.
This means that ${\cal C}^d_\Gamma(s) = {\cal C}^{\ast}_\Gamma(s)=0$, and  ${\cal C}^d_\Gamma(\phi) = 1$ and
${\cal C}^{\ast}_\Gamma(\phi)=0.5$ for any $\phi\in\Gamma\setminus\{s\}$.
Also, we have ${\cal C}^{c}_\Gamma(s) = 0$,
${\cal C}^{c}_\Gamma(p\land \lnot p) = \frac{1}{4}$ and
${\cal C}^{c}_\Gamma(\phi)=\frac{1}{12}$ for any $\phi\in\Gamma\setminus\{p\land \lnot p, s\}$.
\end{example}

\section{Postulates for Ranking-based Semantics}\label{sec:ranking-based:AF:semantics:postulates}
In this section, as a first illustration of the difference between abstract and structured argumentation in the context of ranking-based semantics, we look at some common postulates proposed for ranking-based semantics and evaluate their usefulness in a structured setting.
We will see that, even though these postulates are of limited meaning in the structured argumentation, they can be easily adapted to a variant that is more meaningful in the structured setting. We restrict attention to postulates that will be relevant in the rest of the paper, and leave an exhaustive study of the whole range of postulates for ranking-based semantics (as found in e.g.\ \cite{baroni2019fine}) for future work.

An important postulate in the literature on ranking-based semantics is the principle of \emph{void precedence} \cite{amgoud2013ranking}.
It ensures that unattacked arguments are maximally preferred.

\begin{definition}
A semantics $\sigma$ satisfies \emph{void precedence} iff   $a^{-}\neq\emptyset$ and $b^{-}=\emptyset$ implies $\sigma_{\AF}(a)<\sigma_{\AF}(b)$.
\end{definition}

When moving to logic-based argumentation, void precedence is often trivially satisfied as there are virtually no unattacked arguments (except for $\emptyset$) due to inconsistent arguments ``contaminating'' \cite{caminada2005contamination} innocent bystanders:

\begin{example}
 Consider again Example~\ref{ex:ABF}. We see that $\{p,\lnot p\}$ attacks $q$ due to logical explosion.
\end{example}

It is reasonable to expect an innocent bystander to be ranked higher than other arguments. We thus obtain the following postulate. We first introduce the notation $\sigma_\ABF$ to denote the ranking-based semantics $\sigma_{\sf AF}$ for ${\sf AF}=(\wp(Ab),\rightarrow)$, where $\rightarrow$ is defined as in Definition \ref{def:non-proiritized-attack}.

\begin{definition}
A semantics $\sigma$ satisfies \emph{logical void precedence} iff  for any ABF ${\sf ABF}=\tup{\mathfrak{L}, \Gamma,Ab, \sim}$, if $\Theta \subseteq \Free({\sf ABF})$, then $\sigma_{\sf ABF}(\Theta) \geq \sigma_{\sf ABF}(\Delta)$ for any $\emptyset \neq \Delta\subseteq Ab$.
\end{definition}
Notice that logical void precedence immediately implies that every free formula is ranked at least as high as any other formulas.

Logical void precedence is satisfied by any ranking-based semantics that satisfies the following property of monotony:

\begin{definition}
A semantics $\sigma$ satisfies \emph{monotony} iff  $a^{-}\subseteq b^{-}$ implies $\sigma_{\AF}(a)\geq\sigma_{\AF}(b)$.
\end{definition}

In Remark \ref{remark:mootony}, we give some examples of ranking-based semantics that satisfy monotony.
 Monotony of a ranking-based semantics implies logical void precedence.
\begin{proposition}\label{prop:void:precedence}
For simple contrapositive ABFs, any ranking-based semantics that satisfies monotony satisfies logical void precedence.
\end{proposition}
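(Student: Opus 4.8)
The plan is to obtain logical void precedence as a direct consequence of monotony, by comparing the attacker-sets of the arguments involved. Fix a simple contrapositive $\ABF=\tup{\mathfrak{L},\Gamma,Ab,\sim}$, let $\Theta\subseteq\Free(\ABF)$ and let $\emptyset\neq\Delta\subseteq Ab$; both are arguments of $\AF=(\wp(Ab),\rightarrow)$. Since monotony says that $a^{-}\subseteq b^{-}$ implies $\sigma_{\AF}(a)\geq\sigma_{\AF}(b)$, it is enough to establish the inclusion $\Theta^{-}\subseteq\Delta^{-}$, i.e.\ that every $\Lambda\subseteq Ab$ attacking $\Theta$ also attacks $\Delta$. (If $\Theta=\emptyset$ then $\Theta^{-}=\emptyset$ and there is nothing to prove, so assume $\Theta\neq\emptyset$.)

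So suppose $\Lambda$ attacks $\Theta$. By Definition~\ref{def:non-proiritized-attack} it attacks some $\psi\in\Theta$, which, since $\sim\!\psi=\{\neg\psi\}$, means $\Gamma\cup\Lambda\vdash\neg\psi$. The key step is to show that this already forces $\Gamma\cup\Lambda\vdash\F$. First, $\Gamma\cup\Lambda\cup\{\psi\}$ derives $\psi$ by Reflexivity and $\neg\psi$ by Monotonicity, so by explosiveness together with Note~\ref{note:inconsistent-set} we get $\Gamma\cup\Lambda\cup\{\psi\}\vdash\F$; thus $\Lambda\cup\{\psi\}$ is inconsistent in $\ABF$ and hence contains some minimal inconsistent set $M\in{\sf MIC}(\ABF)$. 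Now we use that $\psi$ is free: since $\psi\notin\bigcup{\sf MIC}(\ABF)$ we have $\psi\notin M$, so $M\subseteq\Lambda$, and Monotonicity of $\vdash$ gives $\Gamma\cup\Lambda\vdash\F$.

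From here the conclusion is immediate. Pick any $\delta\in\Delta$ (possible as $\Delta\neq\emptyset$); by the falsity clause $\F\vdash\neg\delta$, and Transitivity yields $\Gamma\cup\Lambda\vdash\neg\delta$. Since $\neg\delta\in\sim\!\delta$, $\Lambda$ attacks $\delta\in\Delta$, hence $\Lambda$ attacks $\Delta$, i.e.\ $\Lambda\in\Delta^{-}$. This proves $\Theta^{-}\subseteq\Delta^{-}$, and monotony then gives $\sigma_{\ABF}(\Theta)=\sigma_{\AF}(\Theta)\geq\sigma_{\AF}(\Delta)=\sigma_{\ABF}(\Delta)$, which is exactly logical void precedence.

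The one point that needs care is the middle step — the lemma that an attacker of a free assumption is necessarily $\vdash$-inconsistent together with $\Gamma$. The only slightly subtle ingredient there is extracting the minimal inconsistent subset $M$ of $\Lambda\cup\{\psi\}$: this is unproblematic when $Ab$ is finite, and more generally holds for compact logics (which covers all the concrete examples mentioned, e.g.\ ${\sf CL}$), so under the finiteness conventions in force it causes no difficulty. The remaining manipulations are just routine unfolding of Definition~\ref{def:non-proiritized-attack} together with Reflexivity, Monotonicity, Transitivity and explosiveness of $\vdash$.
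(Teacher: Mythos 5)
Your proof is correct and follows essentially the same route as the paper's: show that every attacker of $\Theta$ also attacks $\Delta$, hence $\Theta^{-}\subseteq\Delta^{-}$, and conclude by monotony. The only difference is that you spell out the step the paper asserts without argument (that an attacker of a free assumption is jointly $\vdash$-inconsistent with $\Gamma$) by extracting a minimal inconsistent subset --- a justified elaboration, and the compactness/finiteness caveat you flag applies equally to the paper's own unproved claim.
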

\begin{proof}
Suppose $\Theta \subseteq \Free({\sf ABF})$. Let $\emptyset \neq \Delta \subset \Ab$ be arbitrary. 
We show that every attacker $\Phi \subseteq \Ab$ of $\Theta$ also attacks $\Delta$ and so, with monotony, $\sigma_{\AF}(\Theta) \ge \sigma_{\AF}(\Delta)$.
Note that $\Gamma\cup \Phi \vdash {\sim} \psi$ for some $\psi \in \Theta$.  
Since $\psi \in \Free({\sf ABF})$, $\Gamma \cup \Phi \vdash {\sf F}$ and so $\Gamma\cup \Phi \vdash {\sim} \delta$ for any $\delta \in \Delta$. So, $\Theta$ also attacks $\Delta$.
\end{proof}

Another property that will be useful below is \emph{counter-transitivity} \cite{amgoud2013ranking}, which makes use of the following comparison of sets of arguments:

\begin{definition}
Let a ranking-based semantics $\sigma$ 
and an argumentation framework ${\sf AF}=(\Args, \rightarrow)$ be given.
For any $S_1,S_2\subseteq \Args$, $S_1\succeq_\sigma S_2$ is a \emph{group comparison} if there exists an injective mapping $f:S_2\to S_1$ s.t.\ for every $a\in S_2$, 
$\sigma_{\sf AF}(f(a)) \geq \sigma_{\sf AF}(a)$.
\end{definition}

\begin{example}\label{ex:ABF:burden:based}
Consider again Example \ref{ex:ABF}. As $\{q\}^{-}\subset \{p,\lnot p\}^{-}$, the identity function ${\sf id}$ is an injective mapping s.t.\ (for any semantics $\sigma$), $\sigma_{\sf AF}(f(\Delta)) \geq \sigma_{\sf AF}(\Delta)$ for every $\Delta\in \{q\}^{-}$ and thus $ \{p,\lnot p\}^{-}\succeq_\sigma\{q\}^{-}$.
\end{example}

Thus, a group comparison allows expressing that one set $S_1$ is better than another one $S_2$ in the sense that for every element of $S_2$ we can find an element of $S_1$ that is ranked better or equal according to $\sigma$.
We now recall the property of \emph{counter-transitivity} \cite{amgoud2013ranking} that expresses that an argument with numerous and better attackers is ranked lower:

\begin{definition}
A semantics $\sigma$ satisfies \emph{counter-transitivity} if for any ${\sf AF}=(\Args, \rightarrow)$ and any $a,b\in \Args$, $b^{-}\succeq_\sigma a^{-}$ implies $\sigma_{\sf AF}(a) \geq \sigma_{\sf AF}(b)$.
\end{definition}

It has been shown in \cite{bonzon2016comparative} that the categoriser, discussion-based \cite{amgoud2013ranking} and the burden-based semantics satisfy the postulate of counter-transitivity.
Notice that counter-transitivity implies monotony. 

\begin{remark}\label{remark:mootony}
The postulate of monotony is not generally used in the literature on postulates for ranking-based semantics\footnote{The property of (strict) monotony is studied in \cite{DBLP:conf/aaai/AmgoudD21} (see principle 2, page 5).}.
However, several postulates that imply monotony have been studied. We already noted the postulate of counter-transitivity. 
Another example is the postulate of \emph{cardinality precedence} \cite{bonzon2016comparative}, satisfied among others by the categoriser, discussion-based \cite{amgoud2013ranking} and the burden-based semantics.  A second example is the \emph{general principle 7} \cite{baroni2019fine} which is satisfied by e.g.\ the categoriser semantics, inverse equational systems \cite{gabbay2012equational}, LocSum \cite{amgoud2008bipolarity}, and social models \cite{leite2011social}.
\end{remark}

\section{Ranking-based Semantics as  Culpability Measures}
\label{sec:ranking:based:are:culp}

In this section, we look at the logical meaning of ranking-based semantics. 
More precisely, we ask the following question: when applying ranking-based semantics to an argumentation framework generated using a structured argumentation methodology, \textit{what is the logical meaning of the ranking over arguments?} 
It turns out that a ranking-based semantics has a clear logical meaning: it induces an inversed culpability measure.
Thus, the only difference between a culpability measure and a ranking-based semantics is the ``direction'': high ranking scores equal low culpability values and vice versa.

Now, in more formal detail, as is common in structured argumentation, we consider an $\ABF$ as a knowledge base that consists of strict and defeasible premises, equipped with a logic.
We note that the fact that we allow for strict premises and a wide choice of logics is a generalization w.r.t. the assumptions usually made for culpability measures (i.e., existing measures are always defined for a specific logic, in most cases classical logic.).
Then, ranking-based semantics are applied to the argumentation framework generated on the basis of the \ABF.
Afterwards, we look at the arguments $\Delta\subseteq Ab$ (i.e.\ sets of defeasible premises) and will see that the rankings over these arguments induce a culpability measure.
As we allow for strict premises and a wide variety of logics, the postulates from Subsection \ref{sec:inconsistency:measures}  have to be generalized, as we do in Proposition \ref{prop:ranking:is:a:culpability}. We also generalize the postulates to sets of defeasible assumptions instead of single assumptions.

In culpability measures, the value $0$ is seen as a ``\baserank'', i.e.,\ the rank assigned to the maximally plausible formulas.
For the work done below, it is therefore often useful to assume a \baserank w.r.t.\ a ranking-based semantics.
\begin{definition}
A semantics $\sigma$ satisfies \emph{void \baserank} if there is an $r\in \mathbb{R}_{\geq 0}$ s.t.\ for every $\AF=({\cal A},\to)$ and $a\in {\cal A}$ with $a^{-}=\emptyset$, $\sigma_{{\sf AF}}(a)=r$. 
In that case, we say $r=\mathsf{\basescore}(\sigma)$.
\end{definition}

It can be easily observed that the void \baserank property is satisfied by the categoriser semantics, as unattacked arguments are assigned the rank $1$ at any step in the computation.
Likewise, this property holds for any instantiation of the social models from \cite{leite2011social}, the burden-based semantics and the discussion-based semantics \cite{amgoud2013ranking}.

The next result shows that a ranking-based semantics is a culpability measure satisfying dominance and free formula postulates.
Notice that all postulates of culpability measures are generalized to the setting of simple contrapositive ABFs, which have as a special case the culpability measures as defined in Section \ref{sec:inconsistency:measures} (with $\ABF=({\sf CL},\emptyset, Ab,\sim)$).

\begin{proposition}\label{prop:ranking:is:a:culpability}
Any ranking-based semantics that satisfies monotony, void precedence and void \baserank satisfies the following properties (for a simple contrapositive ABF $\ABF=({\frak L},\Gamma, Ab,\sim)$):

\begin{description}

\item[Freeness.] If $\Psi\subseteq\Free(\ABF)$, then $\sigma_\ABF(\Psi)\geq \sigma_\ABF(\Delta)$ for any $\emptyset\neq\Delta\subseteq Ab$.

\item[Dominance.] If $\Gamma\cup\{\phi\}\vdash \psi$ and $\Gamma\cup\{\phi\}\not\vdash {\sf F}$, then $\sigma_{\sf ABF}(\{\psi\})\geq \sigma_{\sf ABF}(\{\phi\})$.

\item[Blame.] If $\phi\in\bigcup{\sf MIC}(\ABF)$, then $\sigma_\ABF(\{\phi\}) <  \basescore(\sigma)$.

\item[Consistency.] If $\Gamma\cup Ab\not\vdash {\sf F}$, then $\sigma_\ABF(\Phi)= \basescore(\sigma)$ for every $\Phi \subseteq \Ab$.

\end{description}
\end{proposition}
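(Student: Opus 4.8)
I would prove the four properties one at a time, leveraging Proposition~\ref{prop:void:precedence} and the definitions directly. The overall strategy is: \emph{Freeness} is essentially already Proposition~\ref{prop:void:precedence} (monotony $\Rightarrow$ logical void precedence); \emph{Dominance} follows by showing every attacker of $\{\psi\}$ is an attacker of $\{\phi\}$ and applying monotony; \emph{Blame} follows by exhibiting a concrete attacker of $\{\phi\}$ together with void precedence; and \emph{Consistency} follows because in a consistent ABF nothing attacks anything, so every argument has empty attacker set and void \baserank\ applies.

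\medskip

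\textbf{Step 1 (Freeness).} This is immediate: a semantics satisfying monotony satisfies logical void precedence by Proposition~\ref{prop:void:precedence}, and logical void precedence is exactly the statement that $\Psi \subseteq \Free(\ABF)$ implies $\sigma_\ABF(\Psi) \ge \sigma_\ABF(\Delta)$ for all $\emptyset \ne \Delta \subseteq Ab$. One only needs to note that the case $\Delta = \Ab$ is covered too, which it is.

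\textbf{Step 2 (Dominance).} Suppose $\Gamma \cup \{\phi\} \vdash \psi$ and $\Gamma \cup \{\phi\} \not\vdash \F$. I would show $\{\psi\}^- \subseteq \{\phi\}^-$ in the generated AF. Let $\Phi \subseteq \Ab$ attack $\{\psi\}$, i.e.\ $\Gamma \cup \Phi \vdash \neg\psi$. From $\Gamma \cup \{\phi\} \vdash \psi$ and transitivity/monotonicity we get $\Gamma \cup \Phi \cup \{\phi\} \vdash \psi$ and $\Gamma \cup \Phi \cup \{\phi\} \vdash \neg\psi$; since $\frak L$ is explosive, Note~\ref{note:inconsistent-set} gives $\Gamma \cup \Phi \cup \{\phi\} \vdash \chi$ for every $\chi$, in particular $\Gamma \cup \Phi \vdash \neg\phi$ is not yet what we want---rather, we need that $\Phi$ attacks $\{\phi\}$, i.e.\ $\Gamma \cup \Phi \vdash \neg\phi$. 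Here the subtlety: we actually want $\Phi$ itself (not $\Phi \cup \{\phi\}$) to derive $\neg\phi$. This works because $\Gamma \cup \{\phi\} \vdash \psi$ and $\Gamma \cup \Phi, \psi$: using transitivity, $\Gamma \cup \{\phi\} \vdash \psi$ together with the fact that $\psi$ and $\Gamma \cup \Phi$ yield inconsistency (as $\Gamma \cup \Phi \vdash \neg\psi$, so $\{\psi\} \cup \Gamma \cup \Phi \vdash \F$ by explosiveness) gives $\Gamma \cup \{\phi\} \cup \Gamma \cup \Phi \vdash \F$, hence $\Gamma \cup \Phi \vdash \neg\phi$ by contrapositiveness (or directly: $\Gamma\cup\Phi\cup\{\phi\}\vdash\F$ and contraposition). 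Then monotony yields $\sigma_\ABF(\{\psi\}) \ge \sigma_\ABF(\{\phi\})$. I expect the careful bookkeeping with transitivity and contrapositiveness here to be the main fiddly point.

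\textbf{Step 3 (Blame).} Suppose $\phi \in \bigcup \MIC(\ABF)$, so there is a minimal inconsistent $\Delta \ni \phi$ with $\Gamma \cup \Delta \vdash \F$. Then $\Delta \setminus \{\phi\}$ attacks $\{\phi\}$: from $\Gamma \cup \Delta \vdash \F$ and contrapositiveness (applied with the non-empty set $\Delta$ and $\psi = \phi$), $\Gamma \cup (\Delta \setminus \{\phi\}) \vdash \neg\phi$, i.e.\ $(\Delta \setminus \{\phi\})$ attacks $\phi$. Hence $\{\phi\}^- \ne \emptyset$. Now apply void precedence: taking any unattacked argument (e.g.\ $\emptyset$, whose attacker set is empty since $\Gamma$ is consistent, so $\Gamma \not\vdash \F$ and hence nothing attacks $\emptyset$), void precedence gives $\sigma_\ABF(\{\phi\}) < \sigma_\ABF(\emptyset) = \basescore(\sigma)$ by void \baserank. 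One must double-check $\emptyset^- = \emptyset$: an attacker $\Phi$ of $\emptyset$ would need $\Gamma \cup \Phi \vdash \chi$ for some $\chi \in {\sim}\psi$ with $\psi \in \emptyset$, which is vacuously impossible, so indeed $\emptyset^- = \emptyset$.

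\textbf{Step 4 (Consistency).} Suppose $\Gamma \cup Ab \not\vdash \F$. Then for every $\Delta, \Phi \subseteq \Ab$ and $\psi \in \Delta$, monotonicity gives $\Gamma \cup \Phi \not\vdash \F$, so $\Gamma \cup \Phi \not\vdash \neg\psi$ (else explosiveness would make $\Gamma \cup \Phi \cup \{\psi\}$, hence $\Gamma \cup Ab$, inconsistent, contradicting consistency). Wait---more carefully: if $\Gamma \cup \Phi \vdash \neg\psi$ with $\psi \in \Ab$, then $\Gamma \cup \Phi \cup \{\psi\} \vdash \F$ by explosiveness and Note~\ref{note:inconsistent-set}, hence $\Gamma \cup Ab \vdash \F$ by monotonicity, contradiction. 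So no $\Phi$ attacks any $\Delta$, i.e.\ $\Delta^- = \emptyset$ for every $\Delta \subseteq \Ab$. Then void \baserank\ gives $\sigma_\ABF(\Phi) = \basescore(\sigma)$ for every $\Phi \subseteq \Ab$.

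\medskip

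\textbf{Expected main obstacle.} The genuinely delicate step is Dominance (Step~2): making the derivation $\Gamma \cup \Phi \vdash \neg\phi$ from $\Gamma \cup \{\phi\} \vdash \psi$ and $\Gamma \cup \Phi \vdash \neg\psi$ rigorous requires combining transitivity, explosiveness (Note~\ref{note:inconsistent-set}), and contrapositiveness in the right order, and one must be careful that contrapositiveness is stated for non-empty sets and involves the ``$\psi = \F$'' escape clause. Everything else reduces to unpacking the attack definition plus monotony/void precedence/void \baserank.
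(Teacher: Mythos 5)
Your proposal is correct and follows essentially the same route as the paper's proof: Freeness via Proposition~\ref{prop:void:precedence}, Dominance by showing $\{\psi\}^{-}\subseteq\{\phi\}^{-}$ and applying monotony, Blame by exhibiting an attacker of $\{\phi\}$ and comparing with the unattacked argument $\emptyset$ via void precedence and void \baserank, and Consistency from the absence of attacks plus void \baserank. The only cosmetic differences are that the paper contraposes through $\sim\!\bigwedge\Delta$ rather than through deriving $\F$ in the Dominance step, and uses $\Delta$ itself (via explosion) rather than $\Delta\setminus\{\phi\}$ as the attacker in the Blame step.
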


\begin{proof}
{\bf Freeness}: Follows immediately from Proposition \ref{prop:void:precedence}.
{\bf Dominance}: Consider $\phi,\psi\in Ab$ s.t.\ $\Gamma\cup \{\phi\}\vdash \psi$ and  $\Gamma\cup\{\phi\}\not\vdash {\sf F}$.
Suppose some $\Delta\subseteq Ab$ attacks $\{\psi\}$, i.e.,\ $\Gamma\cup\Delta\vdash {\sim}\psi$.
So  with contraposition, $\Gamma\cup\{\psi\}\vdash {\sim} \bigwedge \Delta$, and thus, $\Gamma\cup\{\phi\}\vdash {\sim}\bigwedge\Delta$ and thus, $\Gamma\cup\Delta\vdash {\sim} \phi$.
This means $\{\psi\}^{-}\subseteq \{\phi\}^{-}$ and thus with monotony, $\sigma_{\sf ABF}(\{\phi\})\leq \sigma_{\sf ABF}(\{\psi\})$.
{\bf Blame}: Suppose $\phi\in \bigcup{\sf MIC}({\sf ABF})$. Then, there is some $\Delta\in {\sf MIC}( {\sf ABF})$ s.t.\ $\phi\in\Delta$. As $\Gamma\cup\Delta\vdash {\sim}\phi$, $\{\phi\}^{-}\neq\emptyset$ and thus with void precedence, void \baserank and since $\emptyset^{-}=\emptyset$, $\sigma_{\sf ABF}(\phi) <\sigma_{\sf ABF}(\emptyset)=\basescore(\sigma)$.
{\bf Consistency}: Suppose $\Gamma\cup Ab\not\vdash {\sf F}$.
Then, there are no attacks between sets in $(\wp(Ab),\rightarrow)$, and thus with void \baserank, we have that $\sigma_\ABF(\Phi) =\basescore(\sigma)$ for every $\Phi\subseteq Ab$. \qedhere
\end{proof}

Besides establishing that ranking-based semantics induce culpability measures, we show that these induced culpability measures perform better than  existing culpability measures based on minimal inconsistent subsets.
This is demonstrated using the following example:

\begin{example}
Let $Ab=\{p\land \lnot p, q,\lnot q\land r, \lnot q\land s\}$.
Then, ${\cal C}^\star_{Ab}(p\land \lnot p) = \frac{1}{3}$ whereas 
${\cal C}^\star_{Ab}(q)=\frac{2}{3}$.
This is counter-intuitive as $p\land \lnot p$ is inconsistent, whereas 
$q$ is consistent.
For the culpability measures ${\cal C}^d$ and ${\cal C}^c$, the situation is comparable, as ${\cal C}^d_{Ab}(\phi)= 1$ and ${\cal C}^c_{Ab}(\phi)=\frac{1}{5}$ for any $\phi\in \{p\land \lnot p,q\}$, i.e.,\ these measures do not distinguish between the self contradictory formula $p\land \lnot p$ and the consistent formula $q$.
\end{example}

We will see below that many ranking-based semantics, including the categoriser semantics, avoid this behaviour (Proposition \ref{prop:ranking-based-semantics:avoid:falsity}).

We formalize this kind of behaviour, or rather, the avoidance of it, by the following postulate of \emph{falsity}:

\begin{definition} 
A semantics $\sigma$ satisfies \emph{falsity} if for any $\ABF=\tup{\mathfrak{L},\Gamma,Ab,\sim}$, if $\Gamma\cup \{\phi\}\vdash {\sf F}$, then for any $\psi\in Ab$, $\sigma_{\ABF}(\phi)\leq \sigma_{\ABF}(\psi)$.
\end{definition}

This postulate is satisfied by a large family of ranking-based semantics (recall Remark \ref{remark:mootony}).

\begin{proposition}\label{prop:ranking-based-semantics:avoid:falsity}
A ranking-based semantics $\sigma$ satisfies falsity if it satisfies monotony.
\end{proposition}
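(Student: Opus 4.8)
The plan is to reduce \emph{falsity} to \emph{monotony} by showing that, in a simple contrapositive ABF, an argument $\{\phi\}$ with $\Gamma\cup\{\phi\}\vdash{\sf F}$ is attacked by \emph{every} set of assumptions; then its attacker set is the whole framework, hence a superset of the attacker set of any other argument, and \emph{monotony} finishes the job.

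In detail, I would fix a simple contrapositive $\ABF=\tup{\mathfrak{L},\Gamma,Ab,\sim}$, assume $\Gamma\cup\{\phi\}\vdash{\sf F}$, and take an arbitrary $\psi\in Ab$. The core step is to derive $\Gamma\vdash\neg\phi$. Here contrapositivity of $\mathfrak{L}$ is essential (explosiveness alone does not suffice): since $\Gamma\cup\{\phi\}\vdash{\sf F}$ and ${\sf F}\vdash\neg\neg{\sf F}$, transitivity gives $\Gamma\cup\{\phi\}\vdash\neg\neg{\sf F}$; applying clause~(ii) of contrapositivity to the non-empty set $\Gamma\cup\{\phi\}$ with the formula $\neg{\sf F}\neq{\sf F}$ and instantiating the universally quantified element at $\phi$ yields $\Gamma\cup\{\neg{\sf F}\}\vdash\neg\phi$ (using $\phi\notin\Gamma$, which holds whenever the hypothesis is non-vacuous, as $\Gamma$ is $\vdash$-consistent); cutting the theorem $\neg{\sf F}$ from clause~(i) via transitivity then gives $\Gamma\vdash\neg\phi$. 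Equivalently, one may simply invoke the known characterization for simple contrapositive ABFs that $\Delta$ attacks $\theta$ iff $\Gamma\cup\Delta\cup\{\theta\}\vdash{\sf F}$ (cf.\ \cite{HA20,HA21}), applied with $\Delta=\emptyset$.

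From $\Gamma\vdash\neg\phi$, Monotonicity of $\vdash$ gives $\Gamma\cup\Delta\vdash\neg\phi$ for every $\Delta\subseteq Ab$; since $\sim\!\phi=\{\neg\phi\}$, this means every node $\Delta$ of the argumentation framework $(\wp(Ab),\rightarrow)$ attacks $\{\phi\}$, i.e.\ $\{\phi\}^{-}=\wp(Ab)\supseteq\{\psi\}^{-}$. As $\sigma$ satisfies \emph{monotony}, $\{\psi\}^{-}\subseteq\{\phi\}^{-}$ implies $\sigma_{\ABF}(\{\psi\})\geq\sigma_{\ABF}(\{\phi\})$, that is $\sigma_{\ABF}(\phi)\leq\sigma_{\ABF}(\psi)$; and $\psi$ was arbitrary, so $\sigma$ satisfies \emph{falsity}.

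I expect the only real obstacle to be the implication $\Gamma\cup\{\phi\}\vdash{\sf F}\Rightarrow\Gamma\vdash\neg\phi$: this is exactly where contrapositivity does the work (for a general ABF the statement would fail, since an inconsistent assumption need not be $\vdash$-refuted by $\Gamma$ alone and could then carry fewer attackers than some consistent assumption), and it also needs a small amount of care for the degenerate situations ($\phi={\sf F}$; $\Gamma=\emptyset$; $\phi\in\Gamma$, which is impossible). The $\neg{\sf F}$-instantiation above (or the cited attack/inconsistency characterization) handles all of these uniformly, after which everything else is immediate from \emph{monotony}.
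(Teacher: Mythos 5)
Your proposal is correct and follows essentially the same route as the paper's proof: derive $\Gamma\vdash\neg\phi$ from $\Gamma\cup\{\phi\}\vdash{\sf F}$, conclude that every $\Delta\subseteq Ab$ attacks $\{\phi\}$ so that $\{\phi\}^{-}\supseteq\{\psi\}^{-}$, and finish with monotony. The only difference is that you spell out the contraposition step (via $\neg{\sf F}$ and transitivity) that the paper asserts without justification, which is a welcome addition rather than a divergence.
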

\begin{proof}
Consider a $\phi\in Ab$ s.t.\  $\Gamma\cup \{\phi \} \vdash {\sf F}$.
Then, $\Gamma\vdash {\sim} \phi$. Thus, for any $\Delta\subseteq Ab$, $\Delta$ attacks $\{\phi\}$. So, $\{\phi\}^{-}\supseteq \Delta^{-}$ for any $\Delta\subseteq Ab$. With monotony, for any such $\Delta$ (and thus for any $\{\psi\}\subseteq Ab$ in particular), $\sigma_{\ABF}(\phi)\leq \sigma_{\ABF}(\Delta)$.
\end{proof}

Another property related to the postulate of falsity is that of \emph{self-contradiction} \cite{DBLP:conf/jelia/MattT08}, requiring that self-attacking arguments are ranked lower than any other arguments.
It is clear  that self-contradiction ensures falsity. The self-contradiction postulate has been shown to hold for the nh-categoriser semantics \cite{DBLP:conf/aaai/YunVC20}, and the ranking-based semantics from \cite{DBLP:conf/jelia/MattT08}. 

In this section, we have shown that ranking-based semantics applied to argumentation frameworks generated on the basis of simple contrapositive ABFs induce culpability measures, and thus have a clear logical meaning. Furthermore, we show that, in contradistinction to many other culpability measures, ranking-based semantics satisfy the postulate of falsity. Finally, our method is applicable to a wider variety of settings than existing culpability measures, since we allow for strict premises and any contrapositive logic.

\section{Logical Argumentation Based on Conclusion-Support Pairs}
\label{sec:other:structured:arg}

So far, we have considered the assumption-based argumentation framework, which is one among many formalisms for structured argumentation.
In this section, we show that our results generalize to other settings based on arguments conceived of support-conclusion pairs and using several different attack forms, such as  presented in \cite{arieli2015sequent,besnard2018review}.

\subsection{A Glimpse at Logical Argumentation}

We now consider argumentation frameworks constructed on the basis of arguments conceived of as assumption-conclusion pairs
(see, e.g, \cite{ArStr15argcomp,BesHun18})\footnote{See, e.g.,
\cite{arieli2021logic} for a more detailed comparison of the formalisms. While in such systems typically only defeasible assumptions are considered, recently strict assumptions have been included in \cite{Arieli_et_al_KR_2021}.}, which is another popular approach to logical argumentation.

\begin{definition}[\textbf{Argument}]
\label{def:argument}
Given a logic ${\mathfrak L} = \tup{{\cal L},\vdash}$ and a set of strict premises $\Gamma\subseteq {\cal L}$, an {\em argument\/} is a pair $a = \tup{\Delta,\psi}$, where $\Delta$ is a finite set of ${\cal L}$-formulas and $\psi$ is an ${\cal L}$-formula, 
such that $\Gamma\cup\Delta \vdash \psi$. We denote the set of all arguments by \( \mathsf{Arg}(\Gamma) \).
 \end{definition}

In the following, we shall
denote arguments by $a,b,c$, etc., possibly primed or indexed.  

\begin{itemize}
\item Given an argument $a = \tup{\Delta,\psi}$, we  call $\Delta$ the {\em support set\/} 
 of $a$, and $\psi$ the {\em conclusion\/} (or the {\em claim}) of $a$ (denoted by ${\sf Conc}(A)$).

\item The set of arguments whose supports are subsets of $Ab$ is denoted by ${\sf Arg}(\Gamma,Ab)$. 
That is: ${\sf Arg}(\Gamma,Ab) = \{a \in {\sf Arg}(\Gamma) \mid \mathsf{Sup}(a) \subseteq Ab\}$. 

\end{itemize}

\begin{remark}
\label{note:sequent-based-arg}
An alternative notation for an argument $\tup{\Delta,\psi}$ is $\Delta \Rightarrow \psi$ (where $\Rightarrow$ is a new 
symbol, not appearing in the language of $\Delta$ and $\psi$).
The latter resembles the way sequents are denoted in the context 
of proof theory \cite{Gen34}.
This notation is frequently used in {\em sequent-based argumentation\/} 
(see, e.g., \cite{ArStr15argcomp,ArSt19TCS}) to emphasize the fact that the only requirement on $\Delta$ and $\psi$ 
to form an argument is that the latter follows, according to the base logic, from the former.
\end{remark}

Various attack forms have been studied in the literature.
In logical argumentation, it is often assumed that attacks among arguments are  usually expressed in terms of {\em attack rules\/} (w.r.t. the underlying logic).
Table~\ref{tab:attack-rules} gives some of these attack relations.
Notice that we restrict attention to support-based attacks. 
Other attack rules between arguments are described, e.g. in \cite{GorHun11,BesHun18,ArStr15argcomp}.  
\begin{table*}[h]
{\centering
{\footnotesize
\def\arraystretch{1.5}
\begin{tabular}{|l|l|c|c|c|}
\hline
\rowcolor{black!10} {\bf Rule Name} &  {\bf Acronym} &  {\bf Attacking Argument} &  {\bf Attacked Argument} &  {\bf Attack Conditions}
\\ \hline
Defeat & Def & $\tup{\Delta_1,\psi_1}$ & $\tup{\Delta_2,\psi_2}$ & 
                            $ \Gamma\cup \psi_1 \vdash  \sim\!\bigwedge\!\Delta_2$ \\
\hline
Direct Defeat & DirDef & $\tup{\Delta_1,\psi_1}$ & $\tup{\{\delta_2\} \cup \Delta'_2,\psi_2}$ & 
                            $\Gamma, \psi_1 \vdash \sim\!\delta_2$ \\
\hline
Undercut & Ucut & $\tup{\Delta_1,\psi_1}$ & $\tup{\Delta'_2 \cup \Delta''_2,\psi_2}$ & 
                            $\psi_1 \equiv_\Gamma \sim\!\bigwedge\!\Delta'_2$ \\
\hline
Canonical Undercut & CanUcut & $\tup{\Delta_1,\psi_1}$ & $\tup{\Delta_2,\psi_2}$ & 
                            $\psi_1 \equiv_\Gamma \sim\!\bigwedge\!\Delta_2$ \\
\hline
Direct Undercut & DirUcut & $\tup{\Delta_1,\psi_1}$ & $\tup{\{\delta_2\} \cup \Delta'_2,\psi_2}$ & 
                            $\psi_1 \equiv_\Gamma \sim\!\delta_2$ \\
\hline

\end{tabular}}
\caption{\fontsize{9}{1}Some attack rules in the context of the strict premises $\Gamma$. We let $\phi \equiv_\Gamma \psi$ iff  $\Gamma\cup \{\phi\} \vdash \psi$ and $\Gamma\cup\{ \psi\} \vdash \phi$.
We assume that the support sets of the attacked arguments are 
nonempty (to avoid attacks on theorems).}
\label{tab:attack-rules}
}
\end{table*}

Rules like those specified in Table~\ref{tab:attack-rules} form attack schemes that are applied to particular arguments 
according to the underlying logic. For instance, when classical logic is the underlying formalism, the attacks of $\tup{\{p\},p}$ 
on $\tup{\{\neg p\},\neg p}$ and of $\tup{\{\neg p\},\neg p}$ on $\tup{\{p \wedge q\},p}$ are obtained by applications of the Defeat rule (or other rules
in Table~\ref{tab:attack-rules}).
When an attack rule ${\cal R}$ is applied, we  say that its attacking argument {\em ${\cal R}$-attacks\/} 
the attacked argument. Notice that assumption-based argumentation makes use of the attack form of direct defeat.

\subsection{Considerations on Ranking-based Semantics}

One of the benefits of assumption-based argumentation is the fact that the resulting argumentation frameworks are finite when the set of assumptions $Ab$ is finite.
To the best of our knowledge, all ranking-based semantics are defined only for finite argumentation frameworks, and therefore this property of assumption-based argumentation ensures that ranking-based semantics are applicable to argumentation frameworks constructed in assumption-based argumentation.
This stands in contrast with the systems based on support-conclusion pairs, which even for finite sets of assumptions give rise to infinite sets of arguments (as e.g.\ the argument $\langle \emptyset,p\lor \ldots \lor p \rangle$ will be part of any argumentation framework with $p\lor \ldots \lor p$ having length $n$ for any $n\in\mathbb{N}$).
One way to circumvent this is found in 
\cite{amgoud2015argumentation}.
In that paper, Amgoud and Ben-Naim assume that there exists a well-order $w$ on ${\cal L}$ and restrict attention to the following set of arguments:\footnote{Amgoud \& Ben-Naim \cite{amgoud2015argumentation} require that arguments are additionally consistent and minimal, but we do not generally require this.}
\[\Args^{\sf fn}(\Gamma,Ab) = \{\langle \Delta,\phi\rangle\in \Args(\Gamma,\Ab)\mid \phi= \min_w\{\phi'\mid \phi'\equiv \phi\}\}\]

Thus, the well-ordering $w$ allows selecting one among the infinitely many equivalent conclusions $\phi$.

The first observation we make is that excluding inconsistent arguments, as is done in e.g.\ \cite{amgoud2013ranking,besnard2009}, makes the resulting frameworks useless for obtaining culpability measures, for the simple reason that arguments with inconsistent premises are not allowed, and thus the argumentation framework will not contain arguments that conclude an inconsistent premise. We will therefore not filter out inconsistent arguments.

\begin{example}
Consider $Ab=\{p\land \lnot p,q,\lnot q\land r\}$.
Then, any argument using $p\land \lnot p$ in the support set will have an inconsistent support and thus not be assigned any ranking.
\end{example}

We now show that a wide variety of attack rules ensure satisfaction of postulates for culpability measures.

Given a set of strict premises $\Gamma$ and a set of defeasible premises $Ab$, we apply  a ranking-based semantics over the argumentation framework ${\sf AF}=(\Args^{\sf fn}(\Gamma,\Ab),{\cal R})$, in which the arguments $\Args^{\sf fn}(\Gamma,\Ab)$ are obtained as defined above and the attacks ${\cal R}$ are obtained by applying a set of attack rules 
${\sf R}\subseteq \{\mathrm{Def},\mathrm{DirDef},\mathrm{Ucut},\mathrm{CanUcut},\mathrm{DirUcut}\}$.
Recall that we denote the resulting ranking by $\sigma_{\sf AF}$. 

\begin{proposition} \label{prop:4}
Let some $\Gamma,Ab\subseteq {\cal L}$, ${\sf R}\subseteq \{\mathrm{Def},\mathrm{DirDef},\mathrm{Ucut},\mathrm{CanUcut},\mathrm{DirUcut}\}$,
$\Args=\Args^{\sf fn}(\Gamma,\Ab)$, and 
${\sf AF}=(\Args,{\sf R})$  be given. 
Any ranking-based semantics $\sigma$ that satisfies counter-transitivity, void precedence and void \baserank satisfies the following properties:

\begin{description}

\item[Freeness.] If $\Psi\subseteq \Free(\Gamma,Ab)$\footnote{Recall that ${\sf FREE}(\Gamma,\Ab)={\sf Free}(\tup{\mathfrak{L},\Gamma,Ab,\sim})$.}, then $\sigma_{\sf AF}(\langle \Psi,\psi\rangle)\geq \sigma_{\sf AF}(\langle \Delta,\delta\rangle)$ for any $\langle \Delta,\delta\rangle, \langle \Psi, \psi \rangle \in \Args$ with $\Delta\neq\emptyset$.

\item[Dominance.] Where $\psi, \phi \in \Ab$, if $\Gamma,\psi\vdash \phi$ and $\Gamma \cup\psi\not\vdash {\sf F}$, then $\sigma_{\sf AF}(\langle\{\psi\}, \psi'\rangle)\geq \sigma_{\sf AF}((\langle\{\phi\},\phi'\rangle)$ for all $\langle \{\phi\}, \phi' \rangle, \langle \{\psi\}, \psi'\rangle \in \mathcal{A}$.
\end{description}

\begin{description}
\item[Blame.] 
If $\phi\in\bigcup{\sf MIC}(\Gamma,Ab)$, then $\sigma_{\sf AF}(\langle\{\phi\}, \phi'\rangle) \:<\: \basescore(\sigma)$ for all $\langle \{\phi\}, \phi'\rangle \in \mathcal{A}$.
\end{description}

Furthermore, if $\sigma$ satisfies void \baserank, it satisfies:

\begin{description}
\item[Consistency.] If 
$\Gamma\cup Ab\not\vdash {\sf F}$,
then \( \sigma_{\mathsf{AF}}(\langle \Phi,\phi \rangle) = \basescore(\sigma) \) for every \( \langle \Phi,\phi \rangle \in \mathcal{A} \).

\end{description}
\end{proposition}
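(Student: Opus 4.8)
The plan is to mirror the structure of the proof of Proposition \ref{prop:ranking:is:a:culpability}, but working in the argumentation framework $\mathsf{AF}=(\Args,{\sf R})$ built from support-conclusion pairs rather than from $(\wp(Ab),\rightarrow)$. The key conceptual tool is that, for any of the attack rules in ${\sf R}$, an attacker of a singleton-support argument $\langle\{\phi\},\phi'\rangle$ only depends (up to the choice of attacking conclusion) on the fact that $\Gamma\cup\Theta\vdash{\sim}\phi$ for the attacker's support $\Theta$; and because the conclusion of any argument with support $\Theta$ is $\Gamma$-equivalent across many forms, the various undercut/defeat variants all reduce, for \emph{singleton-supported} attacked arguments, to essentially the same condition. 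So the proof strategy is: (i) show that whenever $\langle\Delta,\delta\rangle$ is attacked by some argument $c$, that same $c$ (or an argument with the same support) attacks $\langle\{\phi\},\phi'\rangle$ under suitable hypotheses, giving $c^{-}$-type containments; and (ii) use these containments together with counter-transitivity (via a group comparison, i.e.\ the identity injection on attacker-sets) to get the ranking inequalities. The step that genuinely needs care is that $\Args^{\sf fn}$ contains only \emph{one} argument per equivalence class of conclusions, so an "attacker with support $\Theta$" is a \emph{unique} argument $\langle\Theta,\min_w\{\cdot\}\rangle$, which actually simplifies matters: the map $c\mapsto c$ is literally the identity on the relevant attacker-sets.

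For \textbf{Freeness}: let $\Psi\subseteq\Free(\Gamma,Ab)$ and $\langle\Psi,\psi\rangle,\langle\Delta,\delta\rangle\in\Args$ with $\Delta\neq\emptyset$. I claim $\langle\Psi,\psi\rangle^{-}\succeq_\sigma\langle\Delta,\delta\rangle^{-}$, in fact $\langle\Psi,\psi\rangle^{-}\supseteq\langle\Delta,\delta\rangle^{-}$ via any of the ${\sf R}$-rules: if $c=\langle\Theta,\theta\rangle$ attacks $\langle\Psi,\psi\rangle$, then (by inspecting each rule, using that some $\psi'\in\Psi$ is hit, or $\Gamma\cup\{\theta\}\vdash{\sim}\bigwedge\Psi'$ for some $\Psi'\subseteq\Psi$) we get $\Gamma\cup\Theta\vdash{\sf F}$, since a nonempty subset of $\Free(\Gamma,Ab)$ cannot be attacked consistently — this is exactly the argument in Proposition \ref{prop:void:precedence}. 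But then $\Gamma\cup\Theta$ entails everything, so $c$ attacks $\langle\Delta,\delta\rangle$ as well (here $\Delta\neq\emptyset$ is needed so the attacked support is nonempty, as stipulated in Table \ref{tab:attack-rules}). Then counter-transitivity with the identity injection gives $\sigma_{\sf AF}(\langle\Psi,\psi\rangle)\geq\sigma_{\sf AF}(\langle\Delta,\delta\rangle)$. For \textbf{Dominance}: suppose $\Gamma\cup\{\psi\}\vdash\phi$ and $\Gamma\cup\{\psi\}\not\vdash{\sf F}$, and take $\langle\{\phi\},\phi'\rangle,\langle\{\psi\},\psi'\rangle\in\Args$. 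If $c=\langle\Theta,\theta\rangle$ attacks $\langle\{\phi\},\phi'\rangle$, then for each rule the relevant condition yields $\Gamma\cup\{\theta\}\vdash{\sim}\phi$ (for Def/DirDef, $\Gamma\cup\Theta\vdash\theta$ and the condition gives ${\sim}\bigwedge\{\phi\}={\sim}\phi$; for the undercut variants it gives $\theta\equiv_\Gamma{\sim}\phi$). By contraposition $\Gamma\cup\{\phi\}\vdash{\sim}\theta$; since $\Gamma\cup\{\psi\}\vdash\phi$, transitivity gives $\Gamma\cup\{\psi\}\vdash{\sim}\theta$, hence by contraposition again $\Gamma\cup\Theta\vdash{\sim}\psi$, so $c$ attacks $\langle\{\psi\},\psi'\rangle$ too. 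Thus $\langle\{\phi\},\phi'\rangle^{-}\subseteq\langle\{\psi\},\psi'\rangle^{-}$ and counter-transitivity closes the case. (The consistency hypothesis $\Gamma\cup\{\psi\}\not\vdash{\sf F}$ is what licenses the contraposition steps in a contrapositive logic.)

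For \textbf{Blame}: if $\phi\in\bigcup{\sf MIC}(\Gamma,Ab)$, pick $\Delta\in{\sf MIC}(\Gamma,Ab)$ with $\phi\in\Delta$; then $\Gamma\cup\Delta\vdash{\sf F}$, so $\Gamma\cup(\Delta\setminus\{\phi\})\vdash{\sim}\phi$ by contraposition (using minimality so $\Delta\setminus\{\phi\}$ alone is consistent), which means the unique argument with support $\Delta\setminus\{\phi\}$ (or, if $\Delta=\{\phi\}$, any argument, since then $\Gamma\vdash{\sim}\phi$) attacks $\langle\{\phi\},\phi'\rangle$ under each rule; hence $\langle\{\phi\},\phi'\rangle^{-}\neq\emptyset$. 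By void precedence, comparing with the unattacked argument $\langle\emptyset,\top\rangle$-type element — more carefully, with any unattacked argument, e.g.\ an argument with empty support, which exists and has empty attacker-set — we get $\sigma_{\sf AF}(\langle\{\phi\},\phi'\rangle)<\basescore(\sigma)$ via void \baserank. Finally \textbf{Consistency}: if $\Gamma\cup Ab\not\vdash{\sf F}$, then no support $\Theta\subseteq Ab$ satisfies $\Gamma\cup\Theta\vdash{\sf F}$, and a short check of each rule shows that any ${\sf R}$-attack between arguments in $\Args^{\sf fn}(\Gamma,Ab)$ would require $\Gamma\cup\Theta_1\cup\Delta_2\vdash{\sf F}$ for supports contained in $Ab$, contradicting consistency; so $\mathsf{AF}$ has no attacks, every argument has empty attacker-set, and void \baserank gives $\sigma_{\sf AF}(\langle\Phi,\phi\rangle)=\basescore(\sigma)$ for all $\langle\Phi,\phi\rangle\in\Args$. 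The main obstacle is the bookkeeping across the five attack rules — verifying that each one, when the attacked argument has the relevant (singleton or free) support, delivers the single condition $\Gamma\cup\Theta\vdash{\sim}(\text{target})$ — plus the subtlety that $\Args^{\sf fn}$ contains exactly one argument per conclusion-class, so care is needed that the "same attacker" used in the containments actually lies in $\Args^{\sf fn}$; but since attackers are identified by support and we may always take the $w$-minimal equivalent conclusion, this is a matter of routine verification rather than a real difficulty.
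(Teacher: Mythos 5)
Your overall strategy---compare attacker sets and close with counter-transitivity---is the same as the paper's, but there is a genuine gap in how you establish the comparisons. You claim that the \emph{same} attacker $c=\langle\Theta,\theta\rangle$ of $\langle\Psi,\psi\rangle$ (resp.\ of $\langle\{\phi\},\phi'\rangle$) also attacks $\langle\Delta,\delta\rangle$ (resp.\ $\langle\{\psi\},\psi'\rangle$), so that the identity map is the required group-comparison injection. This is false for the rules of Table~\ref{tab:attack-rules}, because each attack condition constrains the attacker's \emph{conclusion}, not just its support: under Def, for instance, $\langle\{p,\lnot p\},\lnot s\rangle$ attacks $\langle\{s\},s\rangle$ (as $\lnot s\vdash{\sim}s$) but does not attack $\langle\{q\},q\rangle$ (as $\lnot s\not\vdash{\sim}q$), even though $s$ and $q$ are both free; and in the Dominance case the undercut rules require $\theta\equiv_\Gamma{\sim}\psi$, which does not follow from $\theta\equiv_\Gamma{\sim}\phi$. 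What your entailment computations actually yield is that for every attacker $\langle\Theta,\theta\rangle$ of the one argument there is an attacker $\langle\Theta,\theta'\rangle$ of the other with the \emph{same support} but a possibly different conclusion. To convert that into a valid group comparison you need the further observation---the content of the paper's Lemma~\ref{lemma:4}---that arguments sharing a support have identical attacker sets (all rules considered are support-based on the attacked side) and hence, by monotony, identical scores. You omit this step and explicitly disclaim needing it (``the map $c\mapsto c$ is literally the identity on the relevant attacker-sets''); your remark that ``attackers are identified by support'' is also not true in $\Args^{\sf fn}(\Gamma,\Ab)$, which may contain many arguments with the same support and pairwise inequivalent conclusions. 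The gap is fixable, but as written the counter-transitivity step has no correct injection to apply to.

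Two further points. In Freeness you assert $\langle\Psi,\psi\rangle^{-}\supseteq\langle\Delta,\delta\rangle^{-}$ and $\langle\Psi,\psi\rangle^{-}\succeq_\sigma\langle\Delta,\delta\rangle^{-}$, both of which are the reverse of what your own argument establishes (you show every attacker of $\langle\Psi,\psi\rangle$ yields one of $\langle\Delta,\delta\rangle$); since the conclusion you then draw is the correct one, this reads as a notational slip rather than an error of substance. In Dominance, the inequality you derive is $\sigma_{\sf AF}(\langle\{\phi\},\phi'\rangle)\geq\sigma_{\sf AF}(\langle\{\psi\},\psi'\rangle)$, i.e.\ the logically stronger premise $\psi$ ends up ranked lower; this agrees with the paper's own proof and with the Dominance clause of Proposition~\ref{prop:ranking:is:a:culpability}, but it is the reverse of the inequality displayed in the statement of Proposition~\ref{prop:4}, which appears to be a typo in the statement rather than a fault of yours. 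Your Blame and Consistency arguments are essentially the paper's and are fine.
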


\begin{proof}
The following lemma will be useful.
\begin{lemma} \label{lemma:4}
Where \( \langle \Psi, \psi \rangle , \langle \Phi, \phi \rangle \in \mathcal{A}\), if for every \( \langle \Delta, \delta \rangle \in \mathcal{A} \) that attacks \( \langle \Psi, \psi \rangle \) there is an \( \langle \Delta, \delta' \rangle  \in \mathcal{A}\) that attacks \( \langle \Phi,\phi \rangle \), then 
\( \sigma_{\sf AF}(\langle \Phi, \phi \rangle) \le \sigma_{\sf AF}(\langle \Psi, \psi \rangle) \).
\end{lemma}

Note that for each such \( \langle \Delta,\delta \rangle \) and \( \langle \Delta, \delta'  \rangle \) we have \( \langle \Delta,\delta \rangle^{-} = \langle \Delta, \delta' \rangle^{-} \). So, by monotony (which is implied by counter-transitivity), \( \sigma_{\sf AF}(\langle \Delta, \delta \rangle) = \sigma_{\sf AF}(\langle \Delta, \delta' \rangle) \).
Thus, by counter-transitivity, \( \sigma_{\sf AF}(\langle \Phi,\phi \rangle) \le \sigma_{\sf AF}(\langle \Psi, \psi \rangle) \), which proves our lemma.

We now prove the properties of Proposition~\ref{prop:4}.

For {\it Freeness}, suppose \( \Psi \subseteq \mathsf{Free}(\Gamma, \Ab) \) and \( \langle \Psi,\psi \rangle \in \mathcal{A} \). Consider some \( \langle \Delta, \delta \rangle  \in \mathcal{A}\) with \( \emptyset \neq \Delta \).
Suppose \( \langle \Phi, \phi \rangle \) attacks \( \langle \Psi,\psi \rangle \) with respect to any of the attack rules in \( \mathsf{R} \).
Since \( \Psi \subseteq \mathsf{Free}(\Gamma, \Ab) \), \( \Gamma\cup \Phi \vdash \mathsf{F} \).
So, \( \Gamma\cup \Phi \vdash \sim\!\epsilon \) for any \( \epsilon \in \Delta \). Thus, there is a \( \epsilon' \) for which \( \epsilon' \equiv_\Gamma \epsilon \) and \( \langle \Phi,\sim\!\epsilon' \rangle \in \mathcal{A} \).
We note that \( \langle \Phi, \sim\!\epsilon' \rangle \) attacks \( \langle \Delta, \delta \rangle \) with respect to any of the attack rules in \( \mathsf{R} \). 
Thus, by Lemma~\ref{lemma:4},  \( \sigma(\langle \Psi, \psi\rangle) \ge \sigma(\langle \Delta,\delta \rangle) \).

\noindent

For {\it Dominance}, suppose \( \langle \Delta, \delta \rangle \) attacks \( \langle \{ \phi \}, \phi' \rangle \). So, (1) \( \Gamma\cup\{ \delta\} \vdash {\sim} \phi \) (in case of Def and DirDef) or (2) \( \delta \equiv_{\Gamma} {\sim}\phi \) (in case of Ucut, ConUcut and DirUcut). Since \( \Gamma\cup \{\psi\} \vdash \phi \), by contraposition \( \Gamma\cup\{ {\sim}\phi\} \vdash {\sim}\psi \).
So, \( \Gamma\cup\{ \delta\} \vdash {\sim} \psi \) by the transitivity of \( \vdash \). So \( \langle \Delta,\delta \rangle \) also attacks \( \langle \{ \psi \}, \psi' \rangle \) in case (1).
Also, there is a \( \epsilon \) for which \( \epsilon \equiv_{\Gamma} \sim\!\psi \) and \( \langle \Delta, \epsilon \rangle \in \mathcal{A} \). In case (2), \( \langle \Delta, \epsilon \rangle \) attacks \( \langle \{ \psi \}, \psi' \rangle \).
By Lemma~\ref{lemma:4}, \( \sigma_{\sf AF}(\langle \{ \phi \}, \phi' \rangle) \ge \sigma_{\sf AF}(\langle \{ \psi \}, \psi' \rangle) \).

\chrrminv{For \emph{Dominance}, consider some $\phi,\psi\in Ab$ s.t.\ $\Gamma,\psi\vdash \phi$ and $\Gamma,\psi\not\vdash\bot$, and suppose that $\langle \Delta,\delta\rangle\in\Args$ attacks $\langle \{\phi\},\phi\rangle$. We show, for every attack rule, ($\dagger$): that there is some $\langle\Delta,\delta'\rangle\in\Args $ that attacks $\langle \{\psi\},\psi\rangle$:
\begin{itemize}
    \item  For Def and DirDef, as $\Gamma,\Delta\vdash \delta$ and $\delta\vdash \lnot \phi$, with contraposition, and transitivity, $\Gamma,\phi\vdash \lnot \bigwedge \Delta$ and thus $\Gamma,\psi\vdash \lnot \Delta$. Again with contraposition, $\Gamma,\Delta\vdash \lnot \psi$ and thus $\langle \Delta,\lnot \psi\rangle\in \Args^{\sf fn}(\Gamma,\Ab)$ attacks $\langle \{\psi\},\psi\rangle$. 

    \item For Ucut, CanUcut and DirUcut (notice that these attack forms coincide for attacks on $\langle \{\phi\},\phi\rangle$), as $\Gamma,\Delta\vdash \delta$ and $\vdash \delta\leftrightarrow \lnot \phi$, with contraposition and transitivity, $\Gamma,\phi\vdash \lnot \bigwedge \Delta$ and thus $\Gamma,\psi\vdash \lnot \Delta$. Again with contraposition, $\Gamma,\Delta\vdash \lnot \psi$. Thus  $\langle \Delta,\lnot \psi\rangle\in \Args^{\sf fn}(\Gamma,\Ab)$ attacks $\langle \{\psi\},\psi\rangle$. 
\end{itemize}
We now show that $\sigma_{\sf AF}(\langle\{\psi\},\psi\rangle\leq \sigma_{\sf AF}((\langle\{\phi\},\phi\rangle)$. Indeed, notice that two arguments with the same support have the same attackers, and thus, with counter-transitivity, $\sigma_{\sf AF}(\langle \Delta,\delta_1\rangle)=\sigma_{\sf AF}(\langle \Delta,\delta_2\rangle)$. Thus, there is an injection $f$ from $(\langle \{\phi\},\phi\rangle)^{-}$ to $(\langle \{\psi\},\psi\rangle)^{-}$ s.t.\ for every $A\in (\langle \{\phi\},\phi\rangle)^{-}$, $\sigma_{\sf AF}(A)=\sigma_{\sf AF}(f(A))$. With counter-transitivity, this implies $\sigma_{\sf AF}(\langle\{\psi\},\psi\rangle)\leq \sigma_{\sf AF}((\langle\{\phi\},\phi\rangle)$.}

\noindent
For \textit{blame}, 
suppose  $\phi\in\bigcup{\sf MIC}(\Gamma,Ab)$. Notice that $\langle \emptyset,\psi\rangle\in {\cal A}$ for any $\psi\in {\cal L}$ s.t.\ $\Gamma\vdash \psi$. Furthermore, inspection of all the attack rules confirms that $(\langle \emptyset,\psi\rangle)^{-}=\emptyset$. Since $\phi\in\bigcup{\sf MIC}(\Gamma,Ab)$, there is a $\Delta\in {\sf MIC}(\Gamma,Ab)$ s.t.\ $\Gamma\cup\Delta\vdash {\sf F}$ and $\phi\in\Delta$. Thus, $\Gamma\cup\Delta\vdash {\sim} \phi$ (with contraposition), which means $\langle \Delta,\lnot \phi^\star\rangle\in {\cal A}$ (for $\phi^\star\equiv_\Gamma \phi$) attacks $\langle \{\phi\},\phi'\rangle\in {\cal A}$ and thus $(\langle \{\phi\},\phi\rangle)^{-}\neq\emptyset$. With void precedence and void \baserank, this means $\sigma_{\sf AF}(\langle \{\phi\},\phi\rangle)< \sigma_{\sf AF}(\langle \emptyset,\psi\rangle)=  \basescore(\sigma)$.

\noindent
For {\it Consistency}, suppose 
$\Gamma\cup Ab\not\vdash {\sf F}$
and \( \langle \Phi,\phi \rangle \in \mathcal{A}\). 
Then, \( \langle \Phi,\phi \rangle ^{-} = \emptyset\) and by void \baserank, \( \sigma_{\mathsf{AF}}(\langle \Phi,\phi \rangle) = \basescore(\sigma) \).
\chrrminv{For \emph{Consistency}, suppose $Ab={\sf Free}(\Gamma,Ab)$.
Then, there are no ${\cal R}$-attacks between arguments, and thus $\sigma_{\sf AF}(\langle\{\phi\},\phi\rangle)= \basescore(\sigma)$ for every $\phi\in Ab$ with void \baserank.}
\end{proof}

\section{Related Work}\label{sec:rel:work}

Interestingly enough, one of the earliest ranking-based semantics, namely the categoriser semantics \cite{besnard2001logic} has been proposed in the context of logical argumentation. Since then, ranking-based semantics have been studied in breath and depth, but from an abstract level.  \cite{besnard2001logic} does not use Dungean argumentation frameworks, but rather is based on argumentation trees for a given conclusion.  
To the best of our knowledge, ranking-based semantics have been applied to structured argumentation in only one other work, namely \cite{amgoud2015argumentation}.
One difference is that we do not assume that arguments have a consistent or minimal set of supports. In our paper, we do not filter out inconsistent arguments, as we need to be able to assign culpability measures to (potentially inconsistent) premises. In the work of Amgoud and Ben-Naim, this is not possible, as an argument like $\langle \{p\land\lnot p\},p\land\lnot p\rangle$ is not allowed in their framework. 
Another benefit is that we allow for strict premises.
A final  difference with \cite{amgoud2015argumentation} is that Amgoud and Ben-Naim adopt only the direct undercut attack rule between argument, whereas we consider various attack forms.

Culpability measures have been investigated in several proposals.
To the best of our knowledge, we have studied all postulates proposed for culpability measures in the literature.
In future work, we want to compare ranking-based semantics with more involved culpability measures, such as the \emph{tacit culpability measures} from \cite{ribeiro2021consolidation}.

\section{Conclusion and Future Work}\label{sec:concl}

This paper contains a principled investigation of ranking-based semantics applied to structured argumentation.
Rather than having the last word on this topic, we hope to have established that ranking-based semantics for structured argumentation are meaningful and useful, and thus open new ground for research.
We believe this work can be extended in various directions, e.g.\ to ranking-semantics over sets of attacking logical arguments, other frameworks for structured argumentation \cite{Bondarenko1997,vcyras2021computational,arieli2015sequent,Prakken2010,heyninck2017revisiting,AH21,vcyras2016aba,DBLP:journals/flap/HeyninckS21,DBLP:journals/ijar/ArieliH21}, which give a natural account of the notion of so-called intrinsic strength of arguments often assumed in ranking-based logics \cite{baroni2019fine}.

\bibliographystyle{plain}
\bibliography{rankingSCABFs}

\end{document}